\newtheorem{myDef}{Definition} 
\newtheorem{myTheo}{Theorem}
\newenvironment{proof}{{\noindent\it Proof.}\quad}{\hfill $\square$\par}
\begin{document}
%
\title{Ball $k$-means}
%
%
%
%
\author{Shuyin Xia, Daowan Peng, Deyu Meng, Changqing Zhang, Guoyin Wang, Zizhong Chen, Wei Wei
	\IEEEcompsocitemizethanks{\IEEEcompsocthanksitem Shuyin Xia, Daowan Peng, and Guoyin Wang are with the Department
		of Chongqing Key Laboratory of Computational Intelligence, Chongqing University of Posts and Telecommunications, Chongqing 400065, China.\protect\\
		E-mail: xiasy@cqupt.edu.cn, daowan\_peng@qq.com, wanggy@cqupt.edu.cn
		\IEEEcompsocthanksitem Deyu Meng is with National Engineering Laboratory for Algorithm and Analysis Technologiy on Big Data, Xi’an Jiaotong University,Xi'an 710049, China.\protect\\
		E-mail: dymeng@xjtu.edu.cn
		\IEEEcompsocthanksitem Changqing Zhang is with College of Intelligence and Computing, Tianjin University, 300072, China.\protect\\
		E-mail:zhangchangqing@tju.edu.cn
		\IEEEcompsocthanksitem Zizhong Chen is with Department of Computer Science and Engineering, University of California, Riverside,900 University Avenue, Riverside, CA 92521, USA.\protect\\
		E-mail: chen@cs.ucr.edu
		\IEEEcompsocthanksitem Wei Wei is with School of Computer Science and Engineering, Xi'an University of Technology. Xi'an 710048, China. \protect\\E-mail: weiwei@xaut.edu.cn.
	}
}
\IEEEtitleabstractindextext{%
\begin{abstract}
This paper presents a novel accelerated exact $k$-means algorithm called the Ball $k$-means algorithm, which uses a ball to describe a cluster, focusing on reducing the point-centroid distance computation. The Ball $k$-means can accurately find the neighbor clusters for each cluster resulting distance computations only between a point and its neighbor clusters’ centroids instead of all centroids. Moreover, each cluster can be divided into a stable area and an active area, and the later one can be further divided into annulus areas. The assigned cluster of the points in the stable area is not changed in the current iteration while the points in the annulus area will be adjusted within a few neighbor clusters in the current iteration. Also, there are no upper or lower bounds in the proposed Ball $k$-means. Furthermore, reducing centroid-centroid distance computation between iterations makes it efficient for large k clustering. The fast speed, no extra parameters and simple design of the Ball $k$-means make it an all-around replacement of the naive $k$-means algorithm.
\end{abstract}

\begin{IEEEkeywords}
	
Ball $k$-means, Stable Area, Active Area, Neighbor Cluster.
\end{IEEEkeywords}}

\maketitle

\IEEEdisplaynontitleabstractindextext

%
\IEEEpeerreviewmaketitle

\label{the description of dimesion}

\section{Ball $k$-means Clustering}

In this section, the main idea of the Ball k-mean algorithm is presented by introducing the ball cluster concept, neighbor clusters searching, ball cluster division, and the mechanism on how to reduce centroid-centroid distance computation between iterations.

\subsection{Ball Cluster Concept }

A ball structure is characterized by a radius and centroid. Therefore, to describe a cluster and analyze the proposed method, we propose the ball cluster concept, where a ball is used to describe a cluster.

​
\begin{myDef}
	
	Given a cluster $C$, $C$ is called as a ball cluster that is defined by its centroid $c$ and radius $r$ as follows:

	\begin{equation}
	c = \frac{1}{|N|} \sum_{i=1}^{N} x_i, r = \max (\|x_i - c\|),
	\end{equation}
	
	where $x_i$ denotes a point assigned to $C$, and $|N|$ denotes the number of samples in $C$.
\end{myDef}

\subsection{Neighbor Cluster Searching}
To skip the distance calculation between a point and a centroid that is very far away from that point, we introduce a method that can find the neighbor clusters of each cluster. Thus, the distance computation is limited to the points and their neighbor clusters. The neighbor cluster is defined by Definition 2.

\begin{myDef}
	
	Given two ball clusters $C_i$ and $C_j$, whose centroids are denoted as $c_i$ and $c_j$; $r_i$ represents the radius of $C_i$, if it satisfies the following inequality: 
	
	\begin{equation}
	r_i > \frac{1}{2}\left\| c_i - c_j \right\|,
	\end{equation}
	then, $C_j$ is a neighbor cluster of $C_i$. 
\end{myDef}

Equation (2) indicates the neighbor relationship is not symmetric.
Specifically, for two ball clusters $C_i$ and $C_j$, referring to Fig. 1, their neighbor relationship can be one of the three following types:

(1) $C_i$ and $C_j$ are neighbor clusters, and vice versa, such as clusters $C_2$ and $C_3$ presented in Fig. 1. As $C_2$ and $C_3$ are neighbor clusters, some points in $C_3$ ($C_2$) may be adjusted into $C_2$ ($C_3$) in the current iteration.

(2) $C_i$ is a neighbor cluster of $C_j$, but $C_j$ is not a neighbor cluster of $C_i$; such as $C_1$ and $C_3$ presented in Fig. 1; namely, $C_1$ is a neighbor cluster of $C_3$, but $C_3$ is not a neighbor cluster of $C_1$. Therefore, some points in $C_3$ may be adjusted into $C_1$ but no points in $C_1$ can be adjusted into $C_3$.

(3) $C_i$ and $C_j$ are not neighbor clusters, such as $C_3$ and $C_4$ presented in Fig. 1. Therefore, points in $C_3$ ($C_4$) cannot be adjusted into $C_4$ ($C_3$) in the current iteration. 

\begin{figure}
	\centering
	\includegraphics[width = .47\textwidth]{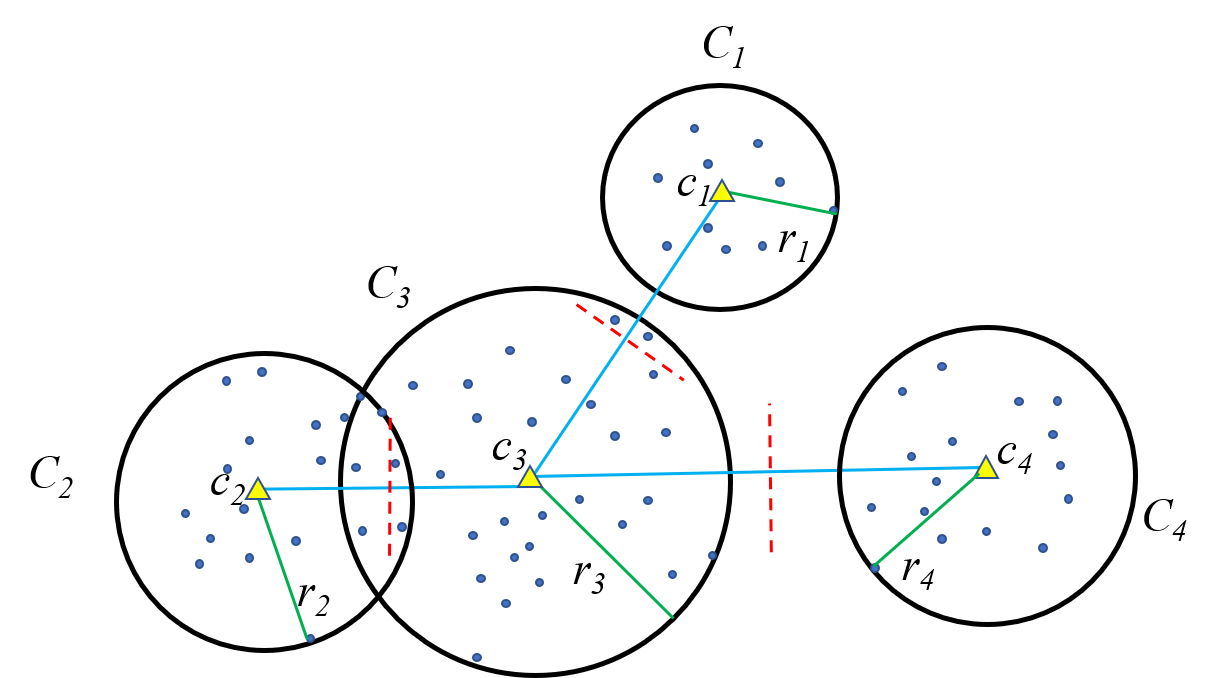}
	\caption{The schematic diagram of neighbor relationship of the queried ball cluster $C_3$. The red dash line represents the vertical bisector of the centroids of two ball clusters. The yellow triangle and green line represent the centroid and radius of a cluster respectively.
	}
	\label{Fig2}
\end{figure}

\begin{myTheo}
	
    Given two clusters $C_i$ and $C_j$ with centroids $c_i$ and $c_j$, respectively. For a queried ball cluster $C$ with a centroid and radius denoted as $c$ and $r$, if $C_i$ is a neighbor cluster of $C$ (i.e.,$r>\frac{1}{2}\|c-c_i\|$) while $C_j$ is not a neighbor cluster of $C$(i.e., $r\leq\frac{1}{2}\|c-c_j\|$), then, it holds that some points in $C$ may be adjusted into $C_i$, while all points in $C$ cannot be adjusted into $C_j$.
	
\end{myTheo}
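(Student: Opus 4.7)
The plan is to split the statement into its two independent halves. The forbidden direction---that no point of $C$ can be reassigned to $C_j$---will be proved by a direct triangle-inequality bound, while the permitted direction---that some point of $C$ may be reassigned to $C_i$---will be proved by exhibiting a concrete witness location inside the ball. The only ingredients needed are the defining property of the ball cluster's radius (every point $x\in C$ satisfies $\|x-c\|\le r$) and the triangle inequality; no properties of the centroid as a mean are required.

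For the first half, I would fix an arbitrary point $x\in C$ and show that $c$ is no farther from $x$ than $c_j$ is. The hypothesis $r \le \tfrac{1}{2}\|c-c_j\|$ gives $\|c-c_j\|\ge 2r$. The triangle inequality then yields $\|x-c_j\| \ge \|c-c_j\| - \|x-c\| \ge 2r - r = r \ge \|x-c\|$, so under the nearest-centroid assignment rule $x$ cannot switch from $C$ to $C_j$. Since $x\in C$ was arbitrary, the conclusion holds for every point of $C$.

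For the second half, I would consider the point $x^{\star} = c + r\,(c_i - c)/\|c_i-c\|$, which lies on the boundary of the ball and on the ray from $c$ through $c_i$. Because $r > \tfrac{1}{2}\|c-c_i\|$ is equivalent to $\|c-c_i\| < 2r$, a short computation gives $\|x^{\star}-c_i\| = \bigl|\,\|c-c_i\|-r\,\bigr| < r = \|x^{\star}-c\|$, so any point of $C$ lying in a sufficiently small neighbourhood of $x^{\star}$ (intersected with the ball) would be reassigned to $C_i$. The main subtlety---and the only real obstacle---is the interpretive one: the theorem cannot assert that $C$ \emph{must} contain such a point, only that the geometric configuration does not preclude reassignment into $C_i$ while it does preclude reassignment into $C_j$. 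The witness-point construction is precisely what the ``may'' half of the statement requires, so once this interpretation is made explicit the proof reduces to two clean triangle-inequality computations.
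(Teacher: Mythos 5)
Your proposal is correct: the first half is exactly the triangle-inequality/perpendicular-bisector argument the paper intends (the condition $r\le\tfrac{1}{2}\|c-c_j\|$ keeps the whole ball on $c$'s side of the bisector of $c$ and $c_j$, as suggested by Fig.~1 and reused later in the proof of Theorem~4), and the paper's source in fact omits an explicit proof at the \texttt{old proof} label, so there is nothing it does that you are missing. Your witness-point construction for the ``may be adjusted into $C_i$'' half is a small, sound addition beyond what the paper argues, and your reading of ``may'' as geometric possibility rather than necessity is the right one.
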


\label{old proof}	
%

For a queried cluster $C$, its neighbor ball clusters can be exactly found by Definition 2, so the distance computation of points in $C$ to the centroids of the other clusters is limited to the neighbor clusters of $C$, resulting in a significant decrease in the distance computation amount. In~\cite{ryvsavy2016geometric}, similar method in finding neighbor cluster was proposed. For two clusters $c_i$ and $c_j$, $||c_i-c_j||$ represents the distance between centroids of $c_i$ and $c_j$.  if $m(c_i) + s(c_i) \geq 1/2||c_i-c_j||$, $c_j$ is the neighbor of $c_i$，where $m(c_i)$ represents the radius of $c_i$, and $s(c_i)$ represents half the distance between centroid of $c_i$ and its closest other centroid. On the contrast, in this paper, if $ri >1/2||c_i-c_j||$, $c_j$ is the neighbor of $c_i$, where $r_i$ represents half the distance. Therefore, in comparison with Definition 2, there was one additional element in ~\cite{ryvsavy2016geometric}. Consequently, the condition in ~\cite{ryvsavy2016geometric} was looser than that in Definition 2. In other words, Definition 2 can yield finding less but exacter neighbor clusters than that in~\cite{ryvsavy2016geometric}. In Section 1.3, it is shown that the ball cluster division can further decrease the distance computation amount.

\subsection{Ball Cluster Division}   

A queried ball cluster can be divided into two parts, stable area and active area, which are defined by Definition 3. The points in the stable area stay in the assigned cluster, which is given in Theorem 2. The active area can be further divided into annulus areas as given in Definition 4. Points in each annulus area need to calculate distance only to some of the neighbor clusters, which is given in Theorem 3.

\subsubsection{Stable and Active Areas}
The definitions of the stable area and active area are as follows:

\begin{myDef}	
     Given a queried ball cluster $C_i$, $\left \{ N_{C_i} \right \}$ denotes the centroid set of the neighbor clusters of $C_i$. If $\left \{ N_{C_i}\right \}\neq \varnothing$, for a ball cluster $C_j$ whose centeroid is $c_j$, and $c_j\epsilon \left \{ N_{C_i} \right \}$, then the sphere area whose centeroid and radius $r$ are equal44 to $c_i$ and $\frac{1}{2}min\left ( \|c_i-c_j\| \right )_{c_j\in{ N_{C_i} }} $ respectively is defined as the stable area of $C_i$. And the rest area is defined as the active area of $C_i$.	
\end{myDef}

\begin{myTheo}
	Given a cluster $C_i$, the points in the stable area of $C_i$ cannot be adjusted into any neighbor clusters in the current iteration. 
\end{myTheo}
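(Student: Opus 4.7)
The plan is to show directly, by the triangle inequality, that any point $x$ lying in the stable area of $C_i$ is at least as close to $c_i$ as to the centroid of any neighbor cluster of $C_i$. Since in $k$-means a point is reassigned only to a cluster whose centroid is strictly closer than the current one, this immediately rules out any reassignment of $x$ to a neighbor cluster in the next step.

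The key step is the following chain of inequalities. Let $c_j$ be the centroid of an arbitrary neighbor cluster $C_j$ of $C_i$, and let $x$ belong to the stable area of $C_i$. By Definition~3, $\|x - c_i\| \le \tfrac{1}{2}\min_{c_k \in N_{C_i}} \|c_i - c_k\|$, so in particular $\|x - c_i\| \le \tfrac{1}{2}\|c_i - c_j\|$. The triangle inequality gives $\|c_i - c_j\| \le \|x - c_i\| + \|x - c_j\|$, which rearranges to
\begin{equation*}
\|x - c_j\| \;\ge\; \|c_i - c_j\| - \|x - c_i\| \;\ge\; \tfrac{1}{2}\|c_i - c_j\| \;\ge\; \|x - c_i\|.
\end{equation*}
Thus for every neighbor centroid $c_j$ we have $\|x - c_j\| \ge \|x - c_i\|$, so $x$ will not be moved from $C_i$ into any neighbor cluster.

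The argument is essentially a one-line triangle-inequality bound, so I do not anticipate a substantive obstacle. The only subtlety worth mentioning explicitly is the edge case $\{N_{C_i}\} = \varnothing$: in that case Definition~3 does not produce a stable area at all, but there are also no neighbor clusters into which a point could be reassigned, so the conclusion of the theorem holds vacuously. I would also note, to make the reasoning clean, that the theorem statement only forbids reassignment to neighbor clusters; reassignment of a stable-area point to a non-neighbor cluster is ruled out separately by Definition~2 / Theorem~1, since a non-neighbor centroid $c_j$ satisfies $r_i \le \tfrac{1}{2}\|c_i - c_j\|$, so no point of $C_i$ (in particular no stable-area point) is closer to $c_j$ than to $c_i$.
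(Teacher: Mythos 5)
Your proof is correct: the chain $\|x-c_j\| \ge \|c_i-c_j\| - \|x-c_i\| \ge \tfrac{1}{2}\|c_i-c_j\| \ge \|x-c_i\|$ is exactly the triangle-inequality argument that the definition of the stable area is designed to support, and the paper (which omits an explicit proof of this theorem in the given source) clearly intends the same Elkan-style reasoning. Your additional remarks on the tie-breaking convention, the $\{N_{C_i}\}=\varnothing$ edge case, and the fact that non-neighbor clusters are already excluded by Theorem~1 are all sound and make the argument complete.
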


However, in a special case, when a ball cluster has no neighbor clusters, the stable area is equal to the whole ball cluster. The description similar to the stable area is only provided in ~\cite{elkan2003using}, but it relies on the upper bound which is bigger than the direct distance when checking that filtering condition. On the contrary, Definition 3 provides an exact definition that relies on no bounds. 

\subsubsection{Active Area Division}

In this section, we show that the active area of a queried cluster can be divided into annulus areas that are generated by the neighbor clusters.
\begin{myDef}Annulus area 
	
	Given a queried ball cluster $C$ with a centroid $c$ and radius $r$, supposing $|\{N_C\}|=k$', $\{N_C\}$ represents the neighbor clusters'centroids set of $C$. $c_{i}$ and $c_{i+1}$ represent the centroids of the $i^{th}$ and $(i+1)^{th}$ closest neighbor clusters of $C$, respectively ($i<k$'). For $x \in C$, the $i^{th}$ annulus area denoted as $\Re_C^{i}$:
	$$\Re_C^{i}=
	\begin{cases}
	\frac{1}{2}\|c-c_{i}\| < \|x-c\|< \frac{1}{2}\|c-c_{i+1}\|,& \text{$i=1...k'$-1}\\
	\frac{1}{2}\|c-c_{i}\| < \|x-c\|<r,& \text{$i=k'$}
	\end{cases}$$
%
\end{myDef}
%
%
%

\begin{myTheo}
	
Given a queried cluster $C$ with a centroid $c$, supposing $|\{N_C\}|=k'$, the points in its $i^{th}$ annulus area can be adjusted only within its first-$i$ closest neighbor clusters and itself ($i\leq k$').
	
\end{myTheo}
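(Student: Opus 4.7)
The plan is to split the argument into two disjoint cases according to whether a candidate destination cluster is a neighbor of $C$ or not. For any non-neighbor cluster of $C$, Theorem 1 already rules out any point of $C$ being reassigned to it, so I would dispatch that case in a single line by invoking Theorem 1. The substantive work then concerns only the neighbor clusters, and specifically the claim that a point $x$ in the $i$-th annulus cannot jump to any $C_j$ with $j>i$, where clusters are indexed by increasing distance $\|c-c_1\|\le\|c-c_2\|\le\cdots\le\|c-c_{k'}\|$.

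For that substantive case, I would fix $x\in \Re_C^{i}$ and an index $j$ with $i<j\le k'$, and show $\|x-c\|<\|x-c_j\|$; since in $k$-means $x$ is assigned to the cluster with the nearest centroid, this inequality forces $x$ to prefer $C$ over $C_j$. The key estimate is the reverse triangle inequality together with the monotonicity induced by the sorted order:
\begin{equation*}
\|x-c_j\|\;\ge\;\|c-c_j\|-\|x-c\|\;\ge\;\|c-c_{i+1}\|-\|x-c\|.
\end{equation*}
Combined with the defining bound $\|x-c\|<\tfrac{1}{2}\|c-c_{i+1}\|$ of the $i$-th annulus, this yields $\|x-c_j\|>\tfrac{1}{2}\|c-c_{i+1}\|>\|x-c\|$, which is exactly what is needed. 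The boundary case $i=k'$ is vacuous, since every neighbor cluster is then among the first-$i$ closest.

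The main obstacle I anticipate is bookkeeping rather than mathematical depth: the annulus definition in Definition 4 splits off the last shell using the radius $r$ instead of a next centroid, so I must verify that the argument does not implicitly use an ``$(i+1)$-th'' neighbor when $i=k'$, and I must also be careful that the ordering of $c_1,\ldots,c_{k'}$ by distance from $c$ is consistent with the indexing used in Definition 4. Once those conventions are pinned down, the triangle-inequality step is essentially one line, and the only remaining sanity check is that strict inequalities are preserved through the chain above; since the annulus is defined by strict inequalities on $\|x-c\|$, this presents no difficulty.
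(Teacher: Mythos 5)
Your proposal is correct and follows essentially the same route the paper intends: the half-distance (perpendicular-bisector) criterion underlying Definitions 2--4, applied via the reverse triangle inequality to show $\|x-c_j\|\ge\|c-c_j\|-\|x-c\|\ge\|c-c_{i+1}\|-\|x-c\|>\|x-c\|$ for any neighbor $C_j$ with $j>i$, with Theorem 1 disposing of non-neighbors and the $i=k'$ case being vacuous for neighbors. Your handling of the strict inequalities and of the last annulus is consistent with Definition 4, so no changes are needed.
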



\label{add iteration}
\subsection{Reducing Centroid-centroid Distances Computation between Iterations}
As presented in Section 1.2, to find the neighbor clusters of each ball cluster, it is needed to calculate all the centroid-centroid distances, which costs $O(k^2)$ per iteration, and for large $k$ clustering, this is a non-negligible cost. In this paper, the purpose of the calculation of centroid-centroid distances is to find the neighbor clusters of the next iteration. If a non-neighbor relationship in the next iteration can be found in advance according to the relationship of the ball clusters in the current iteration, then direct centroid-centroid distance calculation can be avoided. In this work, we develop a method to implement this idea that can find the non-neighbor relationship in advance to avoid unnecessary calculation of centroid-centroid distances. The specific process of this method is formulated as follows.

Let $c_i^{(t)}$ represent the centroid of cluster $C_i$ in the $t^{th}$ iteration, and $\delta(c_i^{(t)})= \| c_i^{(t)}-c_i^{(t-1)} \|$ represent the shift of the cluster centroid of $C_i$ between $(t-1)^{th}$ iteration and $t^{th}$ iteration, and $dist(c_i^{(t)},c_j^{(t)})$ represent the distance of $c_i$ and $c_j$ in the $t^{th}$ iteration.

\begin{myTheo}
	
	Given clusters $C_i$ and $C_j$, supposing that $dist(c_i^{(t-1)},c_j^{(t-1)}) \geq 2r_i^{(t)}+\delta(c_i^{(t)})+\delta(c_j^{(t)})$, then  it holds that $C_j$ cannot be the neighbor cluster of $C_i$ in the current iteration and the centroid-centroid distance of them could be skipped.
\end{myTheo}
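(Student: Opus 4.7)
The plan is to reduce the claim directly to Definition 2. By that definition, $C_j$ is a neighbor of $C_i$ in the current iteration exactly when $r_i^{(t)} > \tfrac{1}{2}\|c_i^{(t)}-c_j^{(t)}\|$. Thus it suffices to prove that the hypothesis forces $\|c_i^{(t)}-c_j^{(t)}\| \ge 2\,r_i^{(t)}$; once this bound is in hand, the neighbor criterion fails and one may legitimately skip the explicit computation of $dist(c_i^{(t)},c_j^{(t)})$.

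First, I would apply the triangle inequality to the polygonal path $c_i^{(t-1)} \to c_i^{(t)} \to c_j^{(t)} \to c_j^{(t-1)}$, obtaining
\[
\|c_i^{(t-1)}-c_j^{(t-1)}\| \;\le\; \|c_i^{(t-1)}-c_i^{(t)}\| + \|c_i^{(t)}-c_j^{(t)}\| + \|c_j^{(t)}-c_j^{(t-1)}\|.
\]
Rewriting the two ``shift'' terms using the notation $\delta(c_i^{(t)})$ and $\delta(c_j^{(t)})$ and rearranging yields
\[
\|c_i^{(t)}-c_j^{(t)}\| \;\ge\; dist\bigl(c_i^{(t-1)},c_j^{(t-1)}\bigr) - \delta(c_i^{(t)}) - \delta(c_j^{(t)}).
\]

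Next, I would substitute the assumption $dist(c_i^{(t-1)},c_j^{(t-1)}) \ge 2\,r_i^{(t)}+\delta(c_i^{(t)})+\delta(c_j^{(t)})$ into the right-hand side. The two shift terms cancel, leaving $\|c_i^{(t)}-c_j^{(t)}\| \ge 2\,r_i^{(t)}$, equivalently $r_i^{(t)} \le \tfrac{1}{2}\|c_i^{(t)}-c_j^{(t)}\|$, which by Definition 2 is precisely the negation of ``$C_j$ is a neighbor of $C_i$ at iteration $t$''.

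There is essentially no mathematical obstacle: the entire argument reduces to one application of the triangle inequality followed by invoking Definition 2. The only point worth emphasizing is an accounting one — namely that each quantity appearing in the hypothesis is already available before the neighbor search of iteration $t$ begins ($dist(c_i^{(t-1)},c_j^{(t-1)})$ was computed in the previous iteration, the shifts $\delta(\cdot)$ follow immediately from the centroid update, and $r_i^{(t)}$ is produced by the current-iteration radius recomputation), so the test is genuinely cheaper than evaluating $dist(c_i^{(t)},c_j^{(t)})$ itself, which is what makes the theorem useful as a filter.
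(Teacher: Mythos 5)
Your proposal is correct and follows essentially the same route as the paper's own proof: bound $dist(c_i^{(t)},c_j^{(t)})$ from below by $dist(c_i^{(t-1)},c_j^{(t-1)})-\delta(c_i^{(t)})-\delta(c_j^{(t)})$, substitute the hypothesis to obtain $dist(c_i^{(t)},c_j^{(t)})\geq 2r_i^{(t)}$, and conclude via Definition~2 that $C_j$ is not a neighbor of $C_i$. The only difference is cosmetic --- you spell out the triangle-inequality derivation of the shift bound, which the paper asserts without explicit justification.
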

\begin{figure}
	\centering
	\includegraphics[width = .47\textwidth]{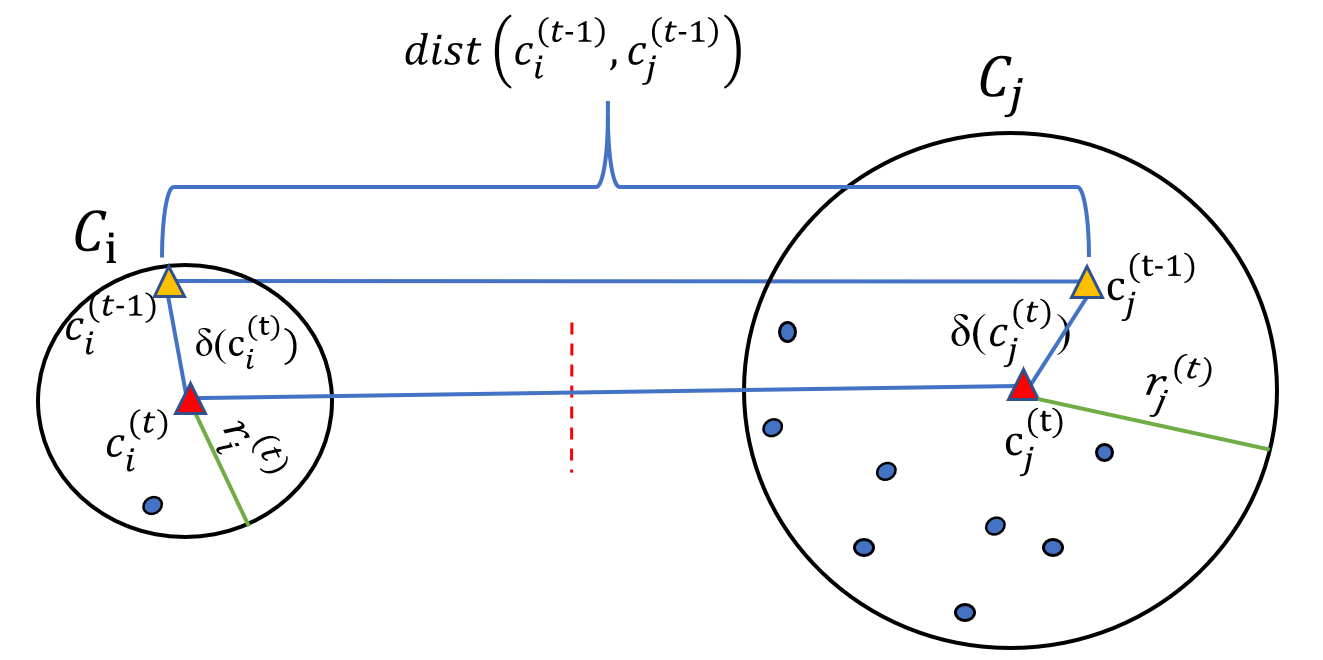}
	\caption{The schematic diagram of  avoiding direct centroid-centroid distance calculation. The red dash line represents the midpoint of $dist(c_i^{(t)},c_j^{(t)})$. $C_j$ is not a neighbor cluster of $C_i$ in the $(t)$-th iteration.
	}
	\label{Fig2}
\end{figure}
\begin{proof}
	With the shift of cluster centroids due to the centroid update, it holds that:\\ $ dist(c_i^{(t)},c_j^{(t)}) \geq dist(c_i^{(t-1)},c_j^{(t-1)}) - \delta(c_i^{(t)}) - \delta(c_j^{(t)})$,
	and the supposing that $dist(c_i^{(t-1)},c_j^{(t-1)}) \geq 2r_i^{(t)}+\delta(c_i^{(t)})+\delta(c_j^{(t)})$,\\
	$\Rightarrow dist(c_i^{(t)},c_j^{(t)})
	\geq 2r_i^{(t)}+\delta(c_i^{(t)})+\delta(c_j^{(t)}) -\delta(c_i^{(t)})-\delta(c_j^{(t)})=2r_i^{(t)}$,\\
	$\Rightarrow dist(c_i^{(t)},c_j^{(t)})
	\geq 2r_i^{(t)}$.
	
	%
	%
	%
	%
	%
	%

	As given in Definition 2 and Theorem 1, when $2r_i\leq dist(c_i,c_j)$, $C_j$ is not the neighbor of $C_i$. So, as it shows in Fig.2 when $dist(c_i^{(t-1)},c_j^{(t-1)}) \geq 2r_i^{(t)}+\delta(c_i^{(t)})+\delta(c_j^{(t)})$, it holds that $ dist(c_i^{(t)},c_j^{(t)})>2r_i^{(t)}$(i.e., $C_j$ cannot be a neighbor cluster of $C_i$ in the current iteration). Thus, the computation of distance between $c_i$ and $c_j$ can be avoided.

\end{proof}

\subsection{Stable Ball Cluster in Subsequent Iterations}

According to the characteristics of the k-means algorithm, with the number of the iteration increasing, more and more ball clusters tend to be stable, i.e., that the points in it are unchanged. In ball k-means, an stable ball cluster can be simply described as that no points move into this ball cluster and no points in this ball cluster move out in current iteration. Based on this characteristic of the k-means algorithm itself, we propose a method to find those stable ball clusters. In this method, a flag corresponding to a ball cluster is used to judge whether a ball cluster is stable. For a queried ball cluster, if no points in the queried ball cluster move into its nearest cluster, and no points in other ball cluster move into the queried ball cluster in the current iteration, then its flag is marked TRUE. 

%
%

\begin{myTheo}
	Ball k-means is implemented on a given data set $D$. For a queried ball cluster C, if the points in C are not changed, it is called as a stable ball cluster. In one iteration, if all the neighbor ball clusters of C are stable, C will not participate into the distance calculations in next iteration. 
\end{myTheo}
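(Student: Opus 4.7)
The plan is to show that under the hypothesized stability every ingredient of the assignment step for $C$ is bit-for-bit identical between iterations $t$ and $t+1$, so no point-to-centroid distance involving a point of $C$ needs to be recomputed. The proof will split naturally into pinning down what is frozen and then transporting the iteration-$t$ assignment certificates forward.

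First I would observe that stability of $C$ forces $c^{(t+1)} = c^{(t)}$ and $r^{(t+1)} = r^{(t)}$, since both quantities are functions of the point set alone and the point set is unchanged; and stability of every neighbor $C_i$ likewise forces $c_i^{(t+1)} = c_i^{(t)}$ and $r_i^{(t+1)} = r_i^{(t)}$. In iteration $t$, ball k-means has already certified, for each $x \in C$, that $\|x - c\| \le \|x - c_i\|$ for every neighbor centroid $c_i$, while Theorem 1 rules out every non-neighbor centroid as a possible target for $x$. Because $x$, $c$, $r$, and every $c_i$ are all frozen, these inequalities carry over verbatim into iteration $t+1$, so no point of $C$ can migrate out. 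A symmetric argument run from the side of any other cluster $C_j$ whose neighbor list contains $C$ shows that no point enters $C$ either: the geometry of $C$ visible to $C_j$ (namely $c$ and $r$) is unchanged, and $C_j$'s own point set and centroid are fixed because $C_j$ is itself stable by hypothesis.

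The main obstacle I expect is handling those clusters $C_j$ that were non-neighbors of $C$ in iteration $t$ and hence are not covered by the stability hypothesis; their centroids may still drift. The cleanest remedy is to reuse Theorem 1 with the frozen $c$ and $r$: a point of $C$ could be attracted to $C_j$ in iteration $t+1$ only if $r > \tfrac{1}{2}\|c - c_j^{(t+1)}\|$, i.e., only if $C_j$ has become a fresh neighbor of $C$. Chaining this with Theorem 4 lets the algorithm rule out such a reclassification through a centroid-centroid shift test that touches no point of $C$. In both alternatives — drift preserving non-neighborhood or drift creating a new neighbor — the point-to-centroid distances associated with $C$ remain untouched, which is exactly the conclusion of the theorem.
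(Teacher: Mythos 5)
Your first two paragraphs reproduce, in more detail, the paper's own (one-line) argument: because $C$ and all of its neighbors are frozen, the centroid $c$, the radius $r$, the neighbor centroids, and hence the partition of $C$ into its stable area and annulus areas are identical to those of the previous iteration, so the assignment step for $C$ can be skipped wholesale. Up to that point you are on essentially the same route as the paper, just more explicit about why the frozen geometry transports the iteration-$t$ assignment certificates forward.

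The difficulty is your third paragraph. You correctly identify the real obstacle --- a cluster $C_j$ that was \emph{not} a neighbor of $C$ in iteration $t$ is not covered by the stability hypothesis, its centroid may drift, and it may come to satisfy $r > \tfrac{1}{2}\|c - c_j^{(t+1)}\|$ --- but your resolution does not close it. Theorem 4 gives only a \emph{sufficient} condition for $C_j$ to remain a non-neighbor; when its premise fails, the algorithm must actually compute $\|c - c_j^{(t+1)}\|$ and may discover that $C_j$ has become a fresh neighbor of $C$. In that case the partition of $C$ changes (the stable-area radius may shrink and a new annulus appears), and the points of $C$ lying beyond $\tfrac{1}{2}\|c - c_j^{(t+1)}\|$ genuinely must compute distances to $c_j^{(t+1)}$ and could migrate out. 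So your closing claim that in both alternatives ``the point-to-centroid distances associated with $C$ remain untouched'' is false in the second alternative, and with it the theorem's conclusion as you have argued it. To finish, you must either strengthen the hypothesis so that the neighbor set of $C$ provably cannot grow (for instance by requiring the Theorem 4 bound to hold for every current non-neighbor of $C$), or concede that the skip is conditional on no new neighbor appearing --- which is also the unstated assumption buried in the paper's own one-sentence proof, so flagging it explicitly is a genuine improvement even though your proposed patch does not yet work.
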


\begin{proof}
	This proof is straightforward. For a queried ball cluster, if all the neighbor ball clusters of C is stable, then the division of the stable area and annulus areas are the same as those in the previous iteration, so the assignment step of the queried ball cluster can be avoided.
\end{proof}

During the iteration of the k-means algorithm, more and more ball clusters will become stable, and the data points in those stable ball cluster will not participate into any distance calculations. Therefore, the time complexity of ball k-means per iteration will become to be sublinear, and the ball k-means will run faster and faster per iteration.


\end{document}